\newcommand{\eos}{\ensuremath{\left<\text{eos}\right>}}
\newcommand{\bos}{\ensuremath{\left<\text{bos}\right>}}
\title{Consistency of a Recurrent Language Model With Respect to \\ Incomplete Decoding}
\author{Sean Welleck$^1$\thanks{~~Equal contribution. Correspondence to: Sean Welleck {\tt wellecks@nyu.edu}.}~~~~~~~~~~~~~Ilia Kulikov$^1$\footnotemark[1]~~~~~~~~~~~~~Jaedeok Kim$^2$\thanks{~~Work done at New York University.} \\
\bf Richard Yuanzhe Pang$^1$~~~~~~~~~~~~~Kyunghyun Cho$^1$$^,$$^3$ \\
$^1$ New York University \ \ \ \ $^2$ Samsung Research\ \ \ \ $^3$ CIFAR Associate Fellow \\
}
\date{}
\newtheorem{theorem}{Theorem}[section]
\newtheorem{lemma}[theorem]{Lemma}
\newtheorem{remark}{Remark}[section]
\newtheorem{definition}{Definition}[section]
\newenvironment{varlemma}[1]{%
  \varlemmainner
}{\endvarlemmainner}
\newenvironment{vartheorem}[1]{%
  \vartheoreminner
}{\endvartheoreminner}
\newtheoremstyle{subdefinition}
  {0pt}       
  {0pt}       
  {\upshape}  
  {0pt}       
  {\bfseries} 
  {.}         
  {5pt plus 1pt minus 1pt} 
  {}          
\theoremstyle{subdefinition}
\begin{document}
\maketitle
\begin{abstract}
Despite strong performance on a variety of tasks, neural sequence models trained with maximum likelihood have been shown to exhibit issues such as length bias and degenerate repetition. 
We study the related issue of receiving infinite-length sequences from a recurrent language model 
when using common decoding algorithms. 
To analyze this issue, we first define inconsistency of a decoding algorithm, meaning that the algorithm can yield an infinite-length sequence that has zero probability under the model.
We prove that commonly used incomplete decoding algorithms -- greedy search, beam search, top-$k$ sampling, and nucleus sampling -- are inconsistent, despite the fact that recurrent language models are trained to produce sequences of finite length.
Based on these insights, we propose two remedies which 
address inconsistency: consistent variants of top-$k$ and nucleus sampling, and a self-terminating recurrent language model.
Empirical results show that inconsistency occurs in practice, and that the proposed methods prevent inconsistency.
\end{abstract}

\section{Introduction}

Neural sequence models trained with maximum likelihood estimation (MLE) have become a standard approach to modeling sequences in a variety of natural language applications 
such as machine translation \cite{bahdanau2014neural}, dialogue modeling \cite{vinyals2015neural}, and language modeling \cite{radford2018gpt2}.
Despite this success, MLE-trained neural sequence models have been shown to exhibit issues such as length bias \cite{sountsov2016length,stahlberg2019nmt} and degenerate repetition \cite{holtzman2019curious}. 
These issues are suspected to be related to the maximum likelihood objective's local normalization, which results in a discrepancy between the learned model's distribution and the distribution induced by the decoding algorithm used to generate sequences \cite{lafferty2001conditional,andor2016globally}. 
This has prompted the development of alternative decoding methods \cite{wu2016google,holtzman2019curious} 
and training objectives \cite{murray2018correcting,welleck2019neural}. 
In this paper, we formalize and study this discrepancy between the model and the decoding algorithm.

We begin by formally defining \textit{recurrent neural language models}, a family that encompasses neural models used in practice, such as recurrent neural networks \cite{elman1990finding,cho2014properties,hochreiter1997lstm}, and transformers \cite{vaswani2017attention}.
Next, we formally define a decoding algorithm -- a function that induces a distribution over sequences given a recurrent language model and a context distribution -- which is used to obtain probable sequences from a model.
In this paper, we show that the distribution induced by a decoding algorithm can contradict this intended use; instead, the decoding algorithm may return improbable, infinite-length sequences.

Our main finding is that a sequence which receives zero probability under a recurrent language model's distribution can receive nonzero probability under the distribution induced by a decoding algorithm.
This occurs when the recurrent language model always ranks the sequence termination token outside of the set of tokens considered at each decoding step, yielding an infinite-length, zero probability sequence.
This holds whenever the decoding algorithm is \textit{incomplete}, in the sense that the algorithm excludes tokens from consideration at each step of decoding, which is the case for common methods such as greedy search, beam search, top-$k$ sampling \cite{fan2018hierarchical}, and nucleus sampling \cite{holtzman2019curious}. 
We formalize our main finding using the notion of \textit{consistency} \cite{chen2017recurrent} -- whether a distribution assigns probability mass only to finite sequences -- and prove that 
a consistent recurrent language model paired with an incomplete decoding algorithm can induce an inconsistent sequence distribution. 

Based on the insight that inconsistency occurs due to the behavior of the termination token under incomplete decoding, we develop two methods for addressing inconsistency.
First, we propose \textit{consistent sampling methods} which guarantee that the termination token is not excluded from selection during decoding. 
Second, we introduce a \textit{self-terminating recurrent language model} which ensures that the termination token is eventually ranked above all others, guaranteeing consistency under incomplete decoding.

To empirically measure inconsistency, we decode sequences from trained recurrent language models and measure the proportion of sequences with lengths far exceeding the maximum training sequence length.  
Our experiments on the Wikitext2 dataset \cite{merity2016pointer} suggest that inconsistency occurs in practice when using incomplete decoding methods, while the proposed consistent sampling methods and self-terminating model parameterization prevent inconsistency and maintain language modeling quality.

The theoretical analysis reveals defects of existing decoding algorithms, providing a way to develop 
future models, inference procedures, and learning algorithms.  
We present methods related to sampling and model parameterization, but there are more directions for future investigation; we close with directions related to sequence-level learning.

\section{Background}

We begin our discussion by establishing background definitions.
First, we define a sequence which is the main object of our investigation. 

\begin{definition}[Sequence]
A sequence $Y$ is an ordered collection of items from a predefined finite vocabulary $V$. A sequence of finite length always ends with a special token $\eos\in V$ that only appears at the end of a sequence.
\end{definition}

Each model we consider generates a sequence conditioned on context information, such as a prefix in sentence completion.
To consider this, we define a context distribution.

\begin{definition}[Context distribution]
A context distribution $p(C)$ is a probability distribution defined over a set $\mathcal{C}$. An element $C\in \mathcal{C}$ is called a context.
\end{definition}

\subsection{Recurrent Language Models}

A recurrent language model is an autoregressive model of a sequence distribution, where each conditional probability is parameterized with a neural network. Importantly, we assume that all tokens in a sequence are dependent on each other under a recurrent language model. This allows us to avoid cases in which the model degenerates to a Markovian language model, such as an $n$-gram model with a finite $n$.  

\begin{definition}[Recurrent language model]
\label{def:rlm}
A recurrent language model $p_\theta$ is a neural network that computes the following 
at each time step:
\begin{align*}
    p_{\theta}(y_t=v \,|\, y_{<t}, C) 
    = \frac{\exp(u_v^\top h_t + c_v)}
        {\sum_{v' \in V} \exp(u_{v'}^\top h_t + c_{v'})},
\end{align*}
where
$h_t = f_{\theta}(y_t, h_{t-1})$
and
$h_0 = g_{\theta}(C)$, and $u,c,\theta$ are parameters. 
A recurrent language model thereby computes the probability of a sequence $Y=(y_1, \ldots, y_T)$ by
\begin{align*}
    p_{\theta}(Y\,|\,C) = \prod_{t=1}^T p_{\theta}(y_t\,|\,y_{<t}, C),
\end{align*}
where $y_{<t}=(y_1,\ldots,y_{t-1})$.
This distribution satisfies
    $y_i \not\!\perp\!\!\!\perp y_{j}\,|\,C,\ \ \forall i < j$.
\end{definition}

Practical variants of the recurrent language model differ by the choice of transition function $f_{\theta}$ \cite{elman1990finding,hochreiter1997lstm,cho2014properties,vaswani2017attention}.
The use of softmax~\citep{bridle1990probabilistic} implies that every unique token in the vocabulary is considered at every location of a sequence. 
\begin{remark}
\label{remark:softmax}
Under the conditional distribution of a recurrent LM, every token $v\in V$ is assigned a positive probability, implying that $0 < p_\theta (v\,|\,y_{<t}, C) < 1.$
Any finite sequence is 
probable 
under a recurrent LM under any context, 
i.e., $p_{\theta}(Y\,|\,C) > 0$ for any sequence $Y$ of finite length.
\end{remark}

\subsection{Decoding Algorithms}

Because it is intractable to decode the most probable sequence, 
it is necessary in practice to use an approximate decoding algorithm. 

\begin{definition}[Decoding algorithm]
A decoding algorithm $\mathcal{F}(p_{\theta}, C)$ is a function
that generates a sequence $\tilde{Y}$ given a recurrent language model $p_{\theta}$ and context $C$.
Let $q_{\mathcal{F}}$ denote the distribution induced by the decoding algorithm $\mathcal{F}$.
\end{definition}

We consider two families of  decoding algorithms.
In our analysis we only consider algorithms that decode in a single pass, forward in time, without modifying previously selected tokens.

\paragraph{Stochastic decoding.} 
The first family consists of stochastic algorithms. Among them, ancestral sampling is asymptotically unbiased and can be used for 
finding the most probable sequence,
although with high variance.

\begin{definition}[Ancestral sampling]
Ancestral sampling $\mathcal{F}_{\text{anc}}$ generates a sequence from a recurrent language model $p_{\theta}$ given context $C$ by recursively sampling from $p_{\theta}(y_t\,|\,\tilde{y}_{<t}, C)$ until $\tilde{y}_t = \left<\text{eos}\right>$: $\tilde{y}_t \sim p_{\theta}(y_t\,|\,\tilde{y}_{<t}, C).$
\end{definition}

To avoid the high variance, two approximate stochastic decoding algorithms have recently been proposed and tested with recurrent language models. 
Top-$k$ sampling considers only a subset of the $k$ most probable tokens from the vocabulary at a time, while nucleus sampling considers only the minimal subset of most probable tokens whose total probability is higher than a predefined threshold. 

\begin{definition}[Top-$k$ sampling~\citep{fan2018hierarchical}]
Top-$k$ sampling $\mathcal{F}_{\text{top-k}}$ generates a sequence from a recurrent language model $p_{\theta}$ given context $C$ by recursively sampling from: 
\begin{align*}
    q(v) \propto 
    \begin{cases}
    p_{\theta}(v\,|\,y_{<t}, C), & \text{if } v \in V_k, \\
    0, & \text{otherwise.}
    \end{cases}
\end{align*}
where $V_k = \underset{v'}{\arg\text{top-k}}\ p_{\theta}(v'\,|\,y_{<t}, C)$.
\end{definition}

\begin{definition}[Nucleus sampling~\citep{holtzman2019curious}]
\label{def:nucleus}
Nucleus sampling $\mathcal{F}_{\text{nuc-}\mu}$ generates a sequence from a recurrent language model $p_{\theta}$ given context $C$ by recursively sampling from the following proposal distribution.
Let $v_1,\ldots,v_{|V|}$ denote tokens in $V$ such that $p_{\theta}(v_i\,|\,y_{<t},C) \geq p_{\theta}(v_j\,|\,y_{<t},C)$ for all $i < j$, and define
\begin{align*}
    q(v) \propto 
    \begin{cases}
      p_{\theta}(v\,|\,y_{<t}, C), & \text{if } v \in V_{\mu}, \\
      0, & \text{otherwise},
    \end{cases}
\end{align*}
where $V_{\mu} = \left\{ v_1, \cdots, v_{k_\mu} \right\}$ with
\begin{align*}
    k_{\mu} = \min 
        \left\{ k
            \,\left\vert\
            \sum_{i=1}^{k} p_{\theta}(v_i\,|\,y_{<t}, C) > \mu 
            \right.
        \right\}.
\end{align*}
\end{definition}

\paragraph{Deterministic decoding.}

The other family consists of deterministic decoding algorithms, where a token is selected deterministically according to a rule at each decoding step. The most naive algorithm, called greedy decoding, simply takes the most probable token at each step. 

\begin{definition}[Greedy decoding]
Greedy decoding $\mathcal{F}_{\text{greedy}}$ generates a sequence from a recurrent language model $p_{\theta}$ given context $C$ by recursively selecting the most likely token from $p_{\theta}(y_t | \tilde{y}_{<t}, C)$ until $\tilde{y}_t = \left<\text{eos}\right>$:
\begin{align*}
    \tilde{y}_t = \arg\max_{v\in V} \log p_{\theta}(y_t=v\,|\,\tilde{y}_{<t}, C).
\end{align*}
\end{definition}

In contrast to greedy decoding, \textit{beam search} with width $k$, $\mathcal{F}_{\text{beam-k}}$, operates on the level of partial sequences or prefixes.
Starting from a set of empty prefixes, at each iteration a new prefix set is formed by expanding each prefix with each possible token, then choosing the $k$ highest scoring expanded prefixes; refer to Appendix \ref{apx:sec-defs} for a formal definition.

\paragraph{Incompleteness.} 

Other than ancestral sampling, the decoding algorithms above are {\it incomplete} in that they only consider a strict subset of the full vocabulary $V$ at each time step, aside from the trivial case of $k=|V|$.\footnote{
Nucleus sampling is incomplete when for every context $C$ and prefix $y_{<t}$, $\min_{v \in V} p_{\theta}(v | y_{<t}, C) < 1-\mu.$ }

\begin{definition}[Incomplete Decoding]
\label{def:incomplete-decoding}
A decoding algorithm $\mathcal{F}$ is incomplete when for each context $C$ and prefix $y_{<t}$, there is a strict subset $V'_t\subsetneq V$ such that
\begin{align*}
    \sum_{v \in V'_t}
    q_{\mathcal{F}}(y_t=v\,|\,y_{<t},C)=1.
\end{align*}
\end{definition}

\section{Consistency of a Decoding Algorithm}
\paragraph{Definition of consistency.}

A recurrent language model $p_{\theta}$ may assign a positive probability to an infinitely long sequence, in which case we call the model inconsistent.
This notion of consistency was raised and analyzed earlier, for instance by \citet{booth1973applying} and \citet{chen2017recurrent}, in terms of whether the distribution induced by $p_{\theta}$ is concentrated on finite sequences. 
We extend their definition to account for the context $C$.

\begin{definition}[Consistency of a recurrent language model]
    A recurrent language model is consistent under a context distribution $p(C)$ if $p_{\theta}(|Y|=\infty) = 0$.
    Otherwise, the recurrent language model is said to be inconsistent.
\end{definition}

Any sequence decoded from a consistent model for a given context is guaranteed to terminate.

\begin{lemma} \label{lemma:finite_seq}
    If a recurrent LM $p_{\theta}$ is consistent,
    $p_{\theta}(|Y|=\infty\,|\,C)$ = 0 for any probable context $C$.\footnote{Proofs of Lemmas~\ref{lemma:finite_seq}-\ref{lemma:decoded_sequence_probable} are in Appendix~\ref{apx:sec-proofs-sec3}.}
\end{lemma}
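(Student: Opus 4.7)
The plan is to derive this directly from the definition of consistency via the law of total probability. Consistency states that the joint event $\{|Y|=\infty\}$ has probability zero when the context is drawn from $p(C)$, and I want to pass from this marginal statement to a conditional statement for each context $C$ with $p(C) > 0$.

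First, I would marginalize over $C$ to write
\begin{align*}
0 \;=\; p_\theta(|Y|=\infty) \;=\; \sum_{C \in \mathcal{C}} p(C)\, p_\theta(|Y|=\infty \mid C),
\end{align*}
where the sum is replaced by an integral in the continuous-context case. This step relies only on the definition of consistency and the tower/total-probability rule, so it is immediate.

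Next, I would observe that every summand $p(C)\, p_\theta(|Y|=\infty \mid C)$ is non-negative, since both factors are probabilities. A sum of non-negative terms that equals zero forces each term to be zero. Hence for every context $C$ for which $p(C) > 0$, namely every probable context in the sense used in the statement, the factor $p_\theta(|Y|=\infty \mid C)$ must vanish.

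There is no real obstacle here beyond pinning down the meaning of ``probable context.'' I would interpret probable to mean $p(C) > 0$ (or, in a continuous setting, that $C$ lies in the support of $p(C)$, in which case the conclusion holds $p(C)$-almost surely and the qualifier ``probable'' is precisely the almost-sure condition). Once that convention is fixed, the argument is a one-line consequence of non-negativity, so the lemma follows.
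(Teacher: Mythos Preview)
Your proposal is correct and essentially matches the paper's proof: both use the law of total probability together with non-negativity of the summands, the only cosmetic difference being that the paper phrases it as a contrapositive (assume some probable $\tilde{C}$ has $p_\theta(|Y|=\infty\mid \tilde{C})>0$ and derive $p_\theta(|Y|=\infty)\geq p(\tilde{C})\,p_\theta(|Y|=\infty\mid \tilde{C})>0$) while you argue directly that a zero sum of non-negative terms forces each term to vanish.
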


Next, we establish a practical condition under which a recurrent language model is consistent. 

\begin{lemma}\label{lemma:bddrlm}
    A recurrent LM $p_{\theta}$ is consistent if $\|h_t\|_p$ is uniformly bounded for some $p\geq1$.
\end{lemma}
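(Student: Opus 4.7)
The plan is to show that uniform boundedness of the hidden states forces a uniform positive lower bound on the conditional probability of $\eos$ at every step, and then a standard tail bound yields consistency.

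First, I would fix the assumption: there exist $p \geq 1$ and $B < \infty$ such that $\|h_t\|_p \leq B$ for every $t$, every prefix $y_{<t}$, and every context $C$ in the support of $p(C)$. Let $q$ be the Hölder conjugate of $p$. Since the vocabulary $V$ is finite, the quantities $M_u = \max_{v \in V} \|u_v\|_q$ and $M_c = \max_{v \in V} |c_v|$ are finite. By Hölder's inequality,
\begin{equation*}
  |u_v^\top h_t + c_v| \;\leq\; \|u_v\|_q \|h_t\|_p + |c_v| \;\leq\; M_u B + M_c \;=:\; M,
\end{equation*}
uniformly in $v$, $t$, $y_{<t}$, and $C$.

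Next, I would plug this into the softmax. For any token $v$ (in particular $\eos$), the normalized probability satisfies
\begin{equation*}
  p_\theta(y_t = v \,|\, y_{<t}, C) \;=\; \frac{\exp(u_v^\top h_t + c_v)}{\sum_{v' \in V} \exp(u_{v'}^\top h_t + c_{v'})} \;\geq\; \frac{e^{-M}}{|V|\, e^{M}} \;=\; \frac{e^{-2M}}{|V|} \;=:\; \varepsilon \;>\; 0.
\end{equation*}
Thus the probability of terminating at every step is at least $\varepsilon$, regardless of history and context.

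With this uniform lower bound, I would conclude by a geometric tail estimate. For any context $C$ and any $T \geq 1$, the event $\{|Y| > T\}$ requires every one of the first $T$ tokens to differ from $\eos$, so by the chain rule
\begin{equation*}
  p_\theta(|Y| > T \,|\, C) \;\leq\; (1 - \varepsilon)^T \;\xrightarrow[T \to \infty]{}\; 0.
\end{equation*}
Hence $p_\theta(|Y| = \infty \,|\, C) = 0$ for every probable $C$, and integrating over $p(C)$ gives $p_\theta(|Y|=\infty) = 0$, which is exactly consistency.

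I do not anticipate a real obstacle here: the work is entirely in spotting the right bound (Hölder on the logits) and observing that finiteness of $V$ turns the parameter norms into harmless constants. The only subtlety is making sure that the bound $\|h_t\|_p \leq B$ is interpreted as holding uniformly over prefixes and contexts, not just marginally, since otherwise the step producing the uniform $\varepsilon$ would fail. Given the recursion $h_t = f_\theta(y_t, h_{t-1})$ with $h_0 = g_\theta(C)$, this uniform reading is the natural one, and I would state it explicitly at the start of the proof.
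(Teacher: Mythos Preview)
Your proposal is correct and follows essentially the same route as the paper: bound the logits via H\"older's inequality using the finite vocabulary to control the parameter norms, deduce a uniform positive lower bound $\varepsilon$ on $p_\theta(\eos\,|\,y_{<t},C)$, and finish with the geometric tail $(1-\varepsilon)^t\to 0$. The only cosmetic difference is that you bound $|u_v^\top h_t + c_v|$ in one shot, whereas the paper separately upper-bounds the log-normalizer and lower-bounds the $\eos$ logit; the content is identical.
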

\begin{proof}[Proof sketch]
If $\|h_t\|_p$ is bounded, then each $u_v^\top h_t$ is bounded, hence $p_{\theta}(\eos | y_{<t}, C)>\xi>0$ for a constant $\xi$. Thus $p_{\theta}(|Y|=\infty) \leq \lim_{t\to\infty} (1 - \xi)^t = 0$, meaning that $p_{\theta}$ is consistent.
\end{proof}

Although this condition is practical because layer normalization or bounded activation functions \cite{elman1990finding,cho2014properties,vaswani2017attention} result in bounded $h_t$,
we show that even if a recurrent language model is consistent, a decoding algorithm may produce an infinite-length sequence.
We formalize this discrepancy using the consistency of a decoding algorithm. 

\begin{definition}[Consistency of a decoding algorithm]
\label{def:consistency}
A decoding algorithm $\mathcal{F}$ is consistent with respect to a consistent recurrent language model $p_{\theta}$ under a context distribution $p(C)$ if the decoding algorithm $\mathcal{F}$ preserves the consistency of the model $p_{\theta}$, that is,
$q_{\mathcal{F}}(|Y|=\infty)=0$.
\end{definition}

When a consistent recurrent language model $p_{\theta}$ and a decoding algorithm $\mathcal{F}$ induce a consistent distribution $q_{\mathcal{F}}$, 
we say that $p_{\theta}$ paired with $\mathcal{F}$ is consistent.
For instance, any consistent recurrent language model paired with ancestral sampling is consistent, 
because the induced distribution $q_{\mathcal{F}_{\text{anc}}}$ is the same as the distribution of the original model.
We also have an analogue of Lemma~\ref{lemma:finite_seq}.

\begin{lemma}
\label{lemma:decoded_sequence_probable}
A consistent decoding algorithm with respect to a consistent recurrent LM decodes only probable sequences. That is, if $q_{\mathcal{F}}(Y\,|\,C)>0$, then $p_{\theta}(Y\,|\,C)>0$ for any probable context $C$.
\end{lemma}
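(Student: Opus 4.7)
The plan is to reduce the claim to two ingredients already available: (i) Remark~\ref{remark:softmax}, which guarantees that every finite sequence has positive probability under $p_\theta$, and (ii) the consistency hypothesis on $\mathcal{F}$, which rules out infinite-length samples. Together these force any $Y$ with $q_{\mathcal{F}}(Y\,|\,C)>0$ to be finite and hence probable under $p_\theta$.

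Concretely, I would argue the contrapositive. Suppose $p_\theta(Y\,|\,C)=0$ for some sequence $Y$ and some probable context $C$. By Remark~\ref{remark:softmax}, every finite sequence has strictly positive probability under $p_\theta(\cdot\,|\,C)$, so $p_\theta(Y\,|\,C)=0$ forces $|Y|=\infty$. It then suffices to show $q_{\mathcal{F}}(Y\,|\,C)=0$, which I would obtain by establishing the conditional analogue $q_{\mathcal{F}}(|Y|=\infty\,|\,C)=0$ for any probable $C$ and using that $\{Y\}$ is contained in the set of infinite sequences.

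To get the conditional statement from the marginal consistency $q_{\mathcal{F}}(|Y|=\infty)=0$ in Definition~\ref{def:consistency}, I would mimic the proof of Lemma~\ref{lemma:finite_seq}: write
\begin{equation*}
0 \;=\; q_{\mathcal{F}}(|Y|=\infty) \;=\; \int_{\mathcal{C}} q_{\mathcal{F}}(|Y|=\infty\,|\,C')\, p(C')\, dC',
\end{equation*}
and conclude that the nonnegative integrand $q_{\mathcal{F}}(|Y|=\infty\,|\,C')$ must vanish $p$-almost everywhere, and in particular at every probable $C$. Combined with the previous paragraph, this yields $q_{\mathcal{F}}(Y\,|\,C) \le q_{\mathcal{F}}(|Y|=\infty\,|\,C)=0$, proving the contrapositive.

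The only nontrivial step is the last measure-theoretic bookkeeping: passing from the marginal consistency of $q_{\mathcal{F}}$ to its conditional version on every probable context. This is exactly the maneuver already carried out for $p_\theta$ in Lemma~\ref{lemma:finite_seq}, so I expect it to go through verbatim and pose no real obstacle; the substance of the lemma is really just the combination of that observation with Remark~\ref{remark:softmax}.
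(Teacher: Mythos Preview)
Your proposal is correct and mirrors the paper's own proof: assume $q_{\mathcal{F}}(Y\,|\,C)>0$ but $p_\theta(Y\,|\,C)=0$, invoke Remark~\ref{remark:softmax} to force $|Y|=\infty$, and then contradict consistency of $q_{\mathcal{F}}$ via the Lemma~\ref{lemma:finite_seq} argument applied to $q_{\mathcal{F}}$. The only difference is cosmetic---the paper simply cites Lemma~\ref{lemma:finite_seq} for the conditional step, whereas you spell out that the same proof carries over from $p_\theta$ to $q_{\mathcal{F}}$; if anything, your version is slightly more careful on this point.
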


\paragraph{Inconsistency of incomplete decoding.}
Any incomplete decoding algorithm (Definition \ref{def:incomplete-decoding}) can be inconsistent regardless of the context distribution, because there is a recurrent LM that places \eos\ outside of $V'_t$ at every step of decoding.
To show this, we construct a consistent recurrent language model whose distribution induced by an incomplete decoding algorithm is inconsistent. 

\begin{figure}
    \centering
    \includegraphics[width=0.9\columnwidth]{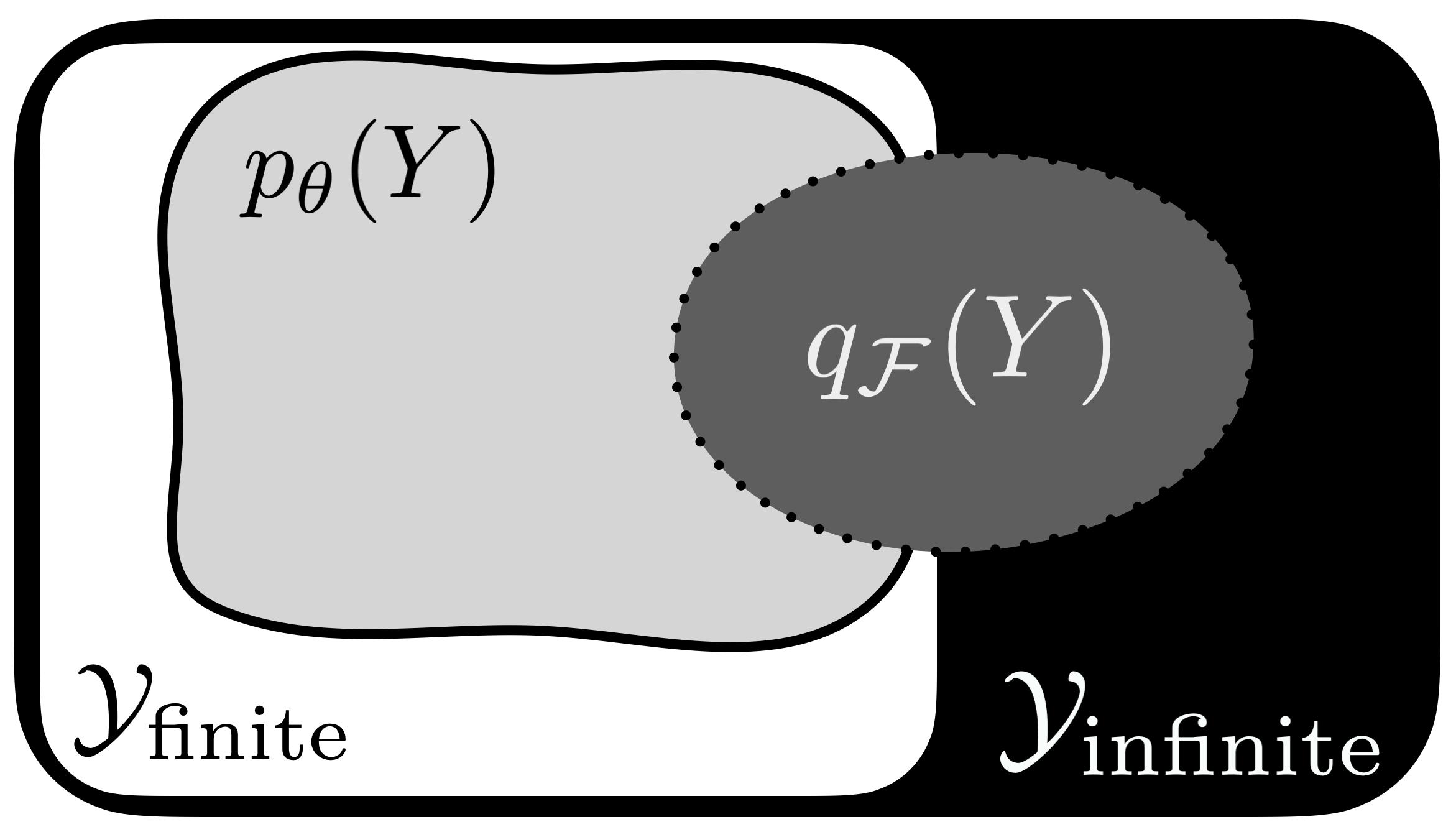}
    \caption{A depiction of the model's sequence distribution (light grey, solid border) and the decoder's induced sequence distribution (dark grey, dotted border).
    The white and black rectangles depict the set of all finite and infinite sequences, respectively.
    We prove that under practical conditions, any incomplete decoding algorithm may be inconsistent with respect to a consistent model, as depicted.
    }
    \label{fig:consistency}
\end{figure}

\begin{theorem}[Inconsistency of an incomplete decoding algorithm]
\label{theorem:inconsistent-incomplete}
There exists a consistent recurrent LM $p_{\theta}$ from which an incomplete decoding algorithm $\mathcal{F}$, that considers only up to $(|V|-1)$-most likely tokens according to $p_{\theta}(y_t\,|\,y_{<t},C)$ at each step $t$, finds an infinite-length sequence $\tilde{Y}$ with probability 1, i.e., $q_{\mathcal{F}}(|Y|=\infty)=1$.
\end{theorem}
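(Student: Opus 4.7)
The plan is to construct an explicit consistent recurrent language model $p_\theta$ in which $\eos$ is always the unique least-probable token at every step, regardless of the history $y_{<t}$ and context $C$. Once such a model is in hand, any incomplete decoding algorithm $\mathcal{F}$ that considers only the top $|V|-1$ tokens must exclude $\eos$ at every decoding step, so every sampled sequence is infinite and $q_{\mathcal{F}}(|Y|=\infty)=1$.

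For the construction, first pick a transition function $f_\theta$ whose outputs are uniformly norm-bounded -- for instance, using a bounded activation such as $\tanh$ together with bounded recurrent weights -- so that $\|h_t\|_\infty \leq B$ for some constant $B$ and all $t,y_{<t},C$. Then fix any bounded output vectors $u_v$ and choose the bias $c_{\eos}$ sufficiently negative that
\begin{equation*}
    c_{\eos} < -B\bigl(\|u_{\eos}\|_1 + \max_{v \neq \eos}\|u_v\|_1\bigr) + \min_{v \neq \eos} c_v,
\end{equation*}
which, by H\"older's inequality, guarantees $u_{\eos}^\top h + c_{\eos} < u_v^\top h + c_v$ for every $h$ in the ball of radius $B$ and every $v \neq \eos$. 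Consistency of $p_\theta$ then follows directly from Lemma~\ref{lemma:bddrlm}, and the non-independence clause of Definition~\ref{def:rlm} is easy to satisfy by making $f_\theta$ genuinely thread $y_{t-1}$ into $h_t$ without breaking the norm bound. By construction, $\eos$ is strictly last in the token ranking at every step, so $\mathcal{F}$ assigns it zero mass at every $t$, forcing every decoded sequence to be infinite.

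The main obstacle I anticipate is promoting the pointwise inequality ``$\eos$ has the smallest logit'' into a uniform statement that holds for every step, history, and context simultaneously. This is precisely what boundedness of $\|h_t\|$ enables: it caps how much the logits can swing across $t$, so a single sufficiently negative $c_{\eos}$ dominates all other logits everywhere at once and no adversarial prefix can ever lift $\eos$ into the top $|V|-1$. Everything else in the argument -- Lemma~\ref{lemma:bddrlm} for consistency, and reading off $q_{\mathcal{F}}(|Y|=\infty)=1$ from the fact that $\eos$ is never selectable -- is essentially immediate once this uniformity is secured.
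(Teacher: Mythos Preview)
Your proposal is correct and follows essentially the same strategy as the paper: construct a recurrent LM with bounded hidden states (so Lemma~\ref{lemma:bddrlm} yields consistency) in which $\eos$ is always the strictly lowest-ranked token, whence any incomplete decoder that keeps at most $|V|-1$ tokens excludes $\eos$ forever. The only difference is cosmetic---the paper forces $\eos$ last via sign structure on the output vectors ($\bar u_v$ positive for $v\neq\eos$, $\bar u_{\eos}$ negative, $c_v=0$, and $h_t$ kept coordinatewise positive by a $\tanh$ network), whereas you achieve it with a single sufficiently negative bias $c_{\eos}$; both mechanisms exploit the same uniform bound on $\|h_t\|$.
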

\begin{proof}
We prove this theorem by constructing a $\tanh$ recurrent network. We define the recurrent function $f_{\theta}$ as
\begin{align*}
    h_t & = f_{\theta}(y_t, h_{t-1}) \\
    & =
    \tanh\left(\left[
    \begin{array}{c | c}
    W_h  & \mathbf{0}  \\
    \hline
    \mathbf{0} & 
    I
    \end{array}
    \right]
    h_{t-1}
    +
    \left[
    \begin{array}{c}
    \mathbf{0} \\
    \hline
    e(y_{t})
    \end{array}
    \right]
    \right), 
\end{align*}
where $e(y_{t}) \in \mathbb{R}^{|V|}$ is a one-hot representation of $y_t$, $W_h \in \mathbb{R}^{d \times d}$ where every entry is positive, and $I$ is an identity matrix of size $|V| \times |V|$.
$h_0 = g_{\theta}(C)$ is constructed to consist of positive values only.
Because each element of $|h_t|$ is bounded by 1, the constructed recurrent language model $p_{\theta}$ is consistent by Lemma~\ref{lemma:bddrlm}.

We set $u_v$ (see Definition~\ref{def:rlm}) to
\begin{align*}
    u_v = \left[ 
    \begin{array}{c}
         \bar{u}_v  \\
         \hline
          e(v)
    \end{array}
    \right], & \quad u_{\eos}
    = \left[
    \begin{array}{c}
         \bar{u}_{\eos}  \\
         \hline
         e(\eos)
    \end{array}
    \right],
\end{align*}
where $v\neq\eos$, all elements of $\bar{u}_v$ are positive,  all elements of $\bar{u}_{\eos}$ are negative, and $e(v)$ is a one-hot representation of $v$. $c_v$ is set to zero.

This defines a valid recurrent language model (Definition~\ref{def:rlm}), since the conditional distribution at each time $t$ is influenced by all the previous tokens. 
More specifically, the logit of a token $v$ depends on $\sum_{t'=1}^t \mathds{1}(y_{t'} = v)$, where $\mathds{1}$ is an indicator function.

This recurrent language model always outputs positive logits for non-\eos\ tokens, and outputs negative logits for the \eos\ token. This implies $p(\eos|\,y_{<t}, C) < p(v\,|\,y_{<t}, C)$ for all $v \in V \backslash \left\{\eos\right\}$. 
This means that \eos\ is always ranked last at each time step, so an incomplete decoding algorithm that considers at most $(|V|-1)$ most probable tokens at each time step from $p_{\theta}(y_t\,|\,y_{<t}, C)$ cannot decode \eos\ and thus always decodes an infinitely long sequence $\hat{Y}$,
i.e., $q_{\mathcal{F}}(|Y|=\infty\,|\,C)=1$ for any context $C$.
It yields $q_{\mathcal{F}}(|Y|=\infty)=1$, while $p_{\theta}(|Y|=\infty) = 0$ due to consistency of the model $p_{\theta}$. 
\end{proof}
Greedy decoding, beam search, top-$k$ sampling, and nucleus sampling are all inconsistent according to this theorem.

\section{Fixing the inconsistency}
In this section, we consider two ways to prevent inconsistency arising from incomplete decoding algorithms. 
First, 
we introduce consistent versions of top-$k$ and nucleus sampling. 
Second, we introduce the \textit{self-terminating recurrent language model}, which is consistent when  paired with any of the  decoding algorithms  considered in this paper.

\subsection{Consistent Sampling Algorithms}
\label{ssec:consistent-sampling}
The proof of Theorem~\ref{theorem:inconsistent-incomplete} suggests that the inconsistency of incomplete decoding algorithms arises from the fact that \eos\ may be excluded indefinitely from the set of top-ranked tokens. We propose a simple modification to top-$k$ and nucleus sampling that forces \eos\ to be included at each step of decoding. 
First, we give a condition for when a particular model $p_{\theta}$ paired with a decoding algorithm $\mathcal{F}$ is consistent.

\begin{theorem}\label{theorem:suff_cond_of_consistency}
Suppose a recurrent LM $p_{\theta}$ has uniformly bounded $\|h_t\|_p$ for some $p\geq 1$.
If a decoding algorithm $\mathcal{F}$ satisfies $q_{\mathcal{F}}(\eos|\,y_{<t}, C) \geq p_{\theta}(\eos|\,y_{<t}, C)$ for every prefix $y_{<t}$ and context $C$,
then the decoding algorithm $\mathcal{F}$ is consistent with respect to the model $p_{\theta}$.\footnote{See Appendix \ref{apx:sec-proofs-sec4} for the proof.}
\end{theorem}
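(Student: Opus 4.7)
The plan is to leverage the key ingredient of Lemma~\ref{lemma:bddrlm}'s proof --- a uniform positive lower bound on the $\eos$ probability --- transport it through the hypothesis to the induced decoding distribution $q_{\mathcal{F}}$, and then apply a geometric tail argument to rule out infinite-length sequences.

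First I would reproduce the lower-bound step from Lemma~\ref{lemma:bddrlm}: since $\|h_t\|_p$ is uniformly bounded, each logit $u_v^\top h_t + c_v$ lies in a bounded interval by H\"older's inequality, and since $V$ is finite the softmax denominator is uniformly bounded above. Consequently there exists a constant $\xi > 0$, independent of $t$, of the prefix $y_{<t}$, and of the context $C$, such that
\begin{align*}
p_{\theta}(\eos \,|\, y_{<t}, C) \geq \xi.
\end{align*}
By the hypothesis of the theorem, this inequality transfers to
\begin{align*}
q_{\mathcal{F}}(\eos \,|\, y_{<t}, C) \geq p_{\theta}(\eos \,|\, y_{<t}, C) \geq \xi
\end{align*}
for every prefix $y_{<t}$ and every context $C$.

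Next I would convert this uniform lower bound into a tail bound on the decoded length. Let $A_t$ denote the event that decoding has not emitted $\eos$ by step $t$, i.e.\ $\tilde{y}_{t'} \neq \eos$ for all $t' \leq t$. Chaining conditionals along any sampled prefix yields
\begin{align*}
q_{\mathcal{F}}(A_t \,|\, C) = \prod_{t'=1}^{t} \bigl(1 - q_{\mathcal{F}}(\eos \,|\, \tilde{y}_{<t'}, C)\bigr) \leq (1 - \xi)^t.
\end{align*}
Since $\{|Y|=\infty\} = \bigcap_{t\geq 1} A_t$ is a decreasing intersection, continuity of measure gives $q_{\mathcal{F}}(|Y|=\infty\,|\,C) \leq \lim_{t\to\infty}(1-\xi)^t = 0$, and marginalizing over $p(C)$ then gives $q_{\mathcal{F}}(|Y|=\infty)=0$, which is exactly consistency of $\mathcal{F}$ with respect to $p_{\theta}$.

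The main obstacle I anticipate is bookkeeping rather than deep analysis: one must be careful that the lower bound $\xi$ really is uniform in the prefix (so that the inequality can be applied inside the expectation defining $q_{\mathcal{F}}(A_t\,|\,C)$ for \emph{every} realized $\tilde{y}_{<t'}$, not merely on an almost-sure set), and one must justify the factorization of $q_{\mathcal{F}}(A_t\,|\,C)$ from the one-step-at-a-time nature of the decoding algorithms under consideration. Once these are in place, the rest of the argument collapses to the geometric bound above.
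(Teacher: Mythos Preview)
Your proposal is correct and follows essentially the same route as the paper's proof: invoke the uniform lower bound $\xi>0$ on $p_{\theta}(\eos\,|\,y_{<t},C)$ from Lemma~\ref{lemma:bddrlm}, transfer it to $q_{\mathcal{F}}$ via the hypothesis, bound $q_{\mathcal{F}}(|Y|>t\,|\,C)\leq(1-\xi)^t$, and pass to the limit and expectation over $C$. The bookkeeping caveats you flag (uniformity of $\xi$ in the prefix, and the step-by-step factorization of $q_{\mathcal{F}}$) are glossed over in the paper's version, so your write-up is if anything slightly more careful.
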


We define consistent variants of top-$k$ and nucleus sampling which satisfy this condition.

\begin{definition}[Consistent top-$k$ sampling]
Consistent top-$k$ sampling is top-$k$ sampling with the following modified proposal distribution:
\begin{align*}
    q(v) \propto
    \begin{cases}
      p_{\theta}(v|y_{<t}, C),&\text{if } v \in V', \\
      0, & \text{otherwise},
    \end{cases}
\end{align*}
where
$V' = \left\{ \eos \right\} \cup \underset{v'}{\arg\text{top-k}}\ p_{\theta}(v'\,|\,y_{<t}, C)$.
\end{definition}

\begin{definition}[Consistent nucleus sampling]
Consistent nucleus sampling is nucleus sampling with the following modified proposal distribution:
\begin{align*}
    q(v) \propto 
    \begin{cases}
      p_{\theta}(v\,|\,y_{<t}, C), & \text{if } v \in V_{\mu} \cup \{\eos\}, \\
      0, & \text{otherwise}.
    \end{cases}
\end{align*}
\end{definition}

The induced probability of \eos\ under these two algorithms is always equal to or larger than the model's probability. 
By Theorem~\ref{theorem:suff_cond_of_consistency}, these algorithms are consistent with respect to any consistent recurrent language model.

\begin{figure}
    \centering
    \includegraphics[width=.98\columnwidth]{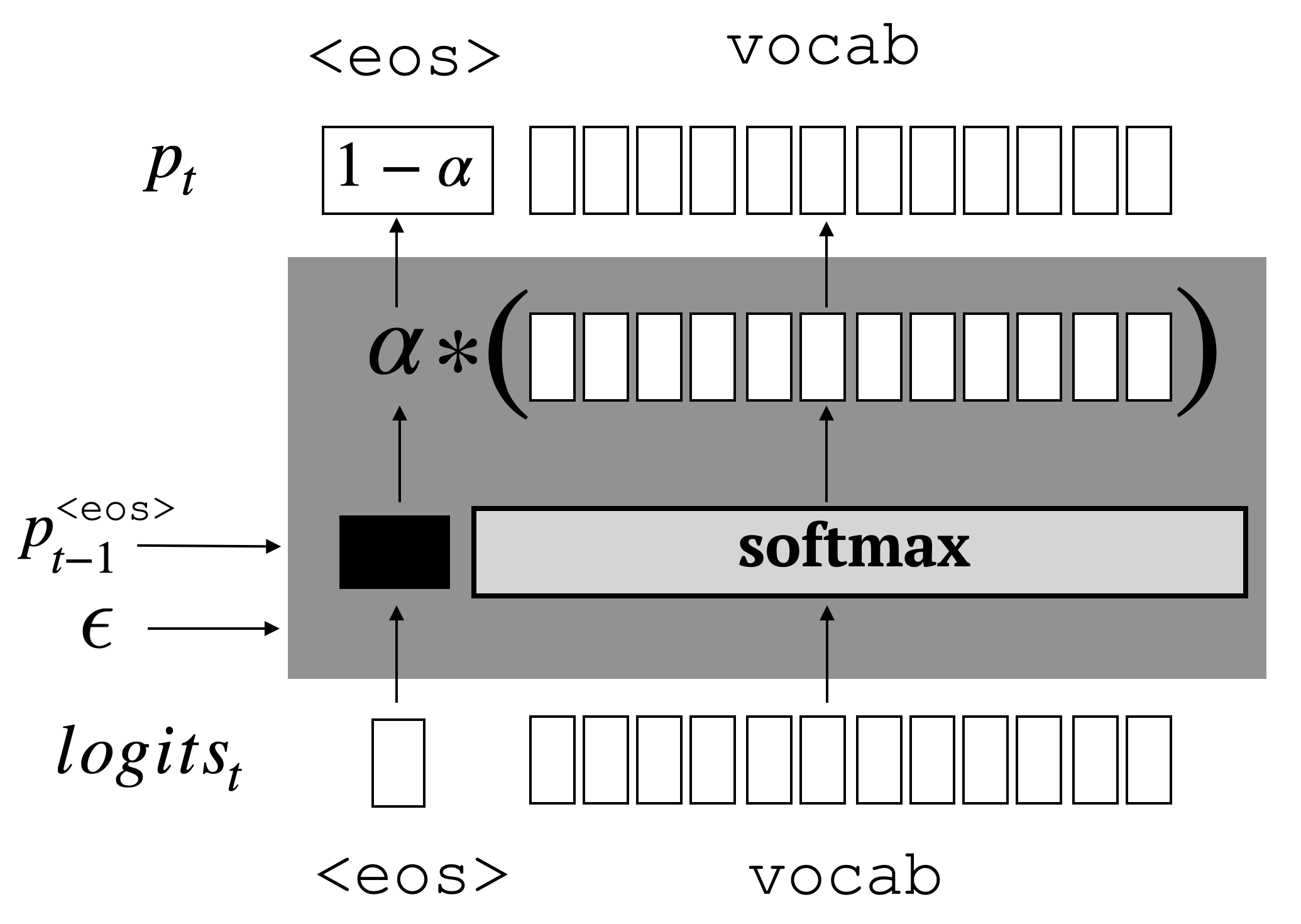}
    \caption{The self-terminating recurrent LM uses the layer shown in grey instead of the standard softmax layer.
    The layer takes logits ($u^\top_\cdot h_t$), the previous step's $\eos$ probability ($p_{t-1}^{\eos}$), and a hyper-parameter $\epsilon\in (0,1)$.
    The layer computes $\alpha$ using Definition \autoref{def:strlm}, which determines the $\eos$ probability ($p_t^{\eos} \in (\epsilon,1)$), and guarantees that $p_t^{\eos}>p_{t-1}^{\eos}$.
    The remaining probability mass is allocated to the non-$\eos$ tokens.
    }
    \label{fig:st-softmax}
\end{figure}

\subsection{Self-Terminating Recurrent LM}
\label{ssec:strlm}
Although these consistent sampling algorithms can be used with any recurrent language model, their stochastic nature may not be suitable for finding a single, highly probable sequence. 
To avoid this limitation, we propose the \textit{self-terminating recurrent language model} (STRLM).

\begin{definition}[Self-terminating recurrent language model]
\label{def:strlm}
A self-terminating recurrent language model computes the following conditional probability at each time step:
\begin{align*}
    p_{\theta}(v\,|\,y_{<t}, C) =&
    \begin{cases}
        1-\alpha(h_t), \quad\quad\quad v=\eos, \\
        \frac{\alpha(h_t) \exp(u_v^\top h_t + c_v)}
          {\sum_{v' \in V'} \exp(u_{v'}^\top h_t + c_{v'})}, 
    \end{cases}\\
    \alpha(h_0)=\sigma(u_{\left<\text{eos}\right>}^\top& h_0),\\
    \alpha(h_t)=\sigma(u_{\left<\text{eos}\right>}^\top& h_t) \left[1- p_{\theta}(\left<\text{eos}\right>|y_{<t-1}, C)\right],
\end{align*}
with $\sigma: \mathbb{R} \to [0,1-\varepsilon]$ and $\varepsilon \in (0,1)$. 
$h_t$ is computed as in the original recurrent LM.
\end{definition}
The underlying idea is that the probability of \eos\ increases monotonically, since
\begin{align*}
    p_{t}^{\eos} & = 1 - \prod_{t'=0}^t \sigma(u_{\left<\text{eos}\right>}^\top h_{t'}).
\end{align*}
Consequently, the STRLM is consistent when paired with greedy decoding or beam search; see Appendix \ref{apx:sec-proofs-sec4} for formal statements and proofs.

\section{Empirical Validation}
The theoretical results rely on the existence of a model that results in inconsistency; it remains to be shown that inconsistency with respect to incomplete decoding occurs with recurrent language models encountered in practice. 
Moreover, while the proposed methods
carry theoretical guarantees in terms of consistency, we must check whether they retain language modeling quality. 
To do so, we perform experiments using a sequence completion task.
In each experiment, we use the beginning of a sequence as context, then decode continuations from a trained recurrent LM and measure the proportion of non-terminated sequences in order to approximately measure inconsistency.
The first experiment (\S\ref{ssec:expr-inconsistency}) shows that inconsistency occurs in practice, and the second experiment (\S\ref{ssec:expr-strlm}) shows the effectiveness of the proposed approaches.
Our third experiment (\S\ref{ssec:gpt-2}) shows that inconsistency also occurs frequently in GPT-2, a large-scale transformer language model.\footnote{Code available at \url{https://github.com/uralik/consistency-lm}.}

\paragraph{Sequence completion.} 
We evaluate recurrent language models on a sequence completion task, which has previously been used to evaluate the effectiveness of sequence models, e.g., \citet{sutskever2011generating,graves2013generating,radford2018gpt2,holtzman2019curious,welleck2019neural}. 
Sequence completion is a general setting for studying the behavior of language models, encompassing machine translation \cite{bahdanau2014neural}, story generation \cite{fan2018hierarchical}, and dialogue modeling \cite{vinyals2015neural}.
The task consists of decoding a continuation $\hat{Y}\sim\mathcal{F}(p_{\theta}, C)$ given a length-$k$ prefix  $C=(c_1,\ldots,c_k)$,
resulting in a completion $(c_1,\ldots,c_k,\hat{y}_1\ldots,\hat{y}_T)$.

\paragraph{Dataset.} 
Our first two experiments use Wikitext2 \cite{merity2016pointer}, which consists of paragraphs from English Wikipedia, since it has frequently been used to evaluate language models \cite{grave2017improving,melis2018on,merity2018regularizing}. 
We consider both word and BPE\footnote{\url{github.com/huggingface/tokenizers}} tokenization.
We split each paragraph into sentences using Spacy\footnote{\url{https://spacy.io/}}.
We split each sequence, using the first $k$ tokens as a context and the remaining tokens as a continuation.
To ensure that each sequence contains a prefix, we prepend padding tokens to make it length $k$.
Special \bos\ and \eos\ tokens are inserted at the beginning and end of each sequence. 
We use $k=10$. 
Table~\ref{tbl:dataset-stat} contains dataset statistics.

\paragraph{Context distribution.} We define empirical context distributions with prefixes from the train, valid, and test sets: 
    $p(C;\mathcal{D}) = \frac{1}{|\mathcal{D}|} \sum_{n=1}^{|\mathcal{D}|} \mathds{1}(C = C^{(n)})$,
where $\mathcal{D}=\{(C^{(n)},Y^{(n)})\}_{n=1}^{N}$ is a dataset split.
 
\paragraph{Evaluation metrics.} 
We use finite sequences to approximately measure the consistency of a model paired with a decoding algorithm, since decoding an infinite-length sequence is impossible.
We use the proportion of decoded continuations that are longer than a predefined limit, 
\begin{align*}
    r_L = \frac{1}{|\mathcal{D}|}\sum_{n=1}^{|\mathcal{D}|} \mathds{1}(|\hat{Y}^{(n)}| \geq L),
\end{align*}
where $\hat{Y}^{(n)}\sim \mathcal{F}(p_{\theta}, C^{(n)})$ for each context $C^{(n)}$ in $\mathcal{D}$.
We call $r_L$ the \textit{non-termination ratio} of the decoding algorithm $\mathcal{F}$ for an underlying model and context distribution.
A value of $r_L$ greater than zero means that some sequences did not terminate within $L$ steps. 
When $L$ is infinity, this implies that the model paired with the decoding algorithm is inconsistent. 
In practice, we use a finite $L$ that is substantially larger than the maximum training sequence length, and we interpret a non-zero $r_L$ as evidence that the model paired with the decoding algorithm is inconsistent. 
We use $L=1500$, more than 10 times the max training sequence length.

In each experiment, we report the mean and standard deviation of metrics across 10 independent initializations. Unless specified otherwise, we report metrics using the test context distribution, since the train, valid, and randomly generated context distributions had similar results.

\paragraph{Training.} We train recurrent language models for sequence completion with maximum likelihood, using the loss 
    $\mathcal{L}(p_{\theta}, Y)
    =-\sum_{t=1}^T\log p_{\theta}(y_t\,|\,y_{<t}, c_1,\ldots,c_k)$,
where $Y=(c_1,\ldots,c_k,y_1,\ldots,y_T)$.
This amounts to running the full training sequence through a recurrent model and zeroing the loss for the first $k$ tokens, so that the first $k$ steps correspond to learning a $g_{\theta}$ that encodes the context. 

\paragraph{Models.} We consider recurrent neural networks with hyperbolic tangent activations \citep[$\tanh$-RNN;][]{elman1990finding} and LSTM units \citep[LSTM-RNN;][]{hochreiter1997lstm}. 
We perform an initial hyper-parameter sweep and select the best set of hyper-parameters for each of $\tanh$-RNN and LSTM-RNN based on the validation perplexities.\footnote{Refer to Appendix \ref{apx:sec-additional} for the hyper-parameter ranges.} 
With this best set of hyperparameters, we train each of these models with 10 different initializations.
The choice of $\tanh$ and LSTM RNNs implies that all of the recurrent language models that we train are consistent according to Lemma \ref{lemma:bddrlm}. 
Our LSTM models achieve similar test perplexity ($91.86 \pm 0.4$, word tokenization) to those reported in previous work \cite{merity2018regularizing}; see Appendix \ref{apx:sec-additional}.

Additionally, we train self-terminating $\tanh$-RNN and LSTM-RNN variants (Definition \ref{def:strlm}) at various values of $\varepsilon$, which controls a lower bound on the termination probability at each step. We use $\sigma(x)=(1-\varepsilon)\cdot \text{sigmoid}(x)$.
We use the hyper-parameters selected in the preceding grid search. Below, we consider BPE tokenization; similar conclusions held for word tokenization.\footnote{Refer to Appendix for results with word tokenization.}
\begin{table}[t]
\centering
\small 

\begin{tabular}{lll}
\toprule
&\multicolumn{1}{l}{$\mathbf{\tanh}$\textbf{-RNN}} & \multicolumn{1}{l}{\textbf{LSTM-RNN}} \\ 
\midrule
\textbf{ancestral} & 0.00 $\pm$ 0.0 & 0.00 $\pm$ 0.0  \\
\midrule
\textbf{greedy} & 12.35 $\pm$ 5.18 & 1.53 $\pm$ 1.41\\
\textbf{beam-2} & 1.38 $\pm$ 0.95 & 0.07 $\pm$ 0.06  \\ 
\textbf{beam-4} & 0.25 $\pm$ 0.19 & 0.00 $\pm$ 0.01  \\
\midrule
\textbf{topk-2} & 0.01 $\pm$ 0.01 & 0.01 $\pm$ 0.01 \\
\textbf{topk-4} & 0.00 $\pm$ 0.0 & 0.00 $\pm$ 0.01 \\  
\textbf{nucleus-0.2} & 0.06 $\pm$ 0.02 & 0.13 $\pm$ 0.15 \\
\textbf{nucleus-0.4} & 0.04 $\pm$ 0.02 & 0.02 $\pm$ 0.01 \\
\midrule
\textbf{consistent topk-2} & 0.00 $\pm$ 0.0 & 0.00 $\pm$ 0.01 \\
\textbf{consistent topk-4} & 0.00 $\pm$ 0.0 & 0.00 $\pm$ 0.0 \\
\textbf{consistent nucleus-0.2} & 0.04 $\pm$ 0.02 & 0.01 $\pm$ 0.01 \\
\textbf{consistent nucleus-0.4} & 0.02 $\pm$ 0.02 & 0.01 $\pm$ 0.01\\
\bottomrule
\end{tabular}
\caption{Non-termination ratio ($r_L$ (\%)) of decoded sequences using ancestral sampling, incomplete, and consistent decoding methods.}
\label{tbl:rnn-results-decoding-bpe}
\end{table}

\subsection{Inconsistency of Recurrent LMs}
\label{ssec:expr-inconsistency}

In this experiment, we demonstrate evidence of inconsistency with incomplete decoding methods. 
Table~\ref{tbl:rnn-results-decoding-bpe} shows non-termination ratios for the recurrent language models using the decoding algorithms considered in this work.
Decoding with ancestral sampling always resulted in sequences that terminated within $L$ steps, since the induced distribution is the same as that of the consistent model.
On the other hand, the non-zero non-termination ratios for the incomplete decoding algorithms suggest inconsistency with respect to each algorithm, providing evidence for Theorem \ref{theorem:inconsistent-incomplete}. 

Using greedy decoding, roughly 12\% of all contexts resulted in a non-terminating continuation with the $\tanh$-RNN, and roughly 1\% with the LSTM-RNN. 
Nucleus sampling also produced non-terminating sequences with the $\tanh$-RNN (0.06\%, nuc-0.2) and LSTM-RNN (0.13\%, nuc-0.2).
Top-$k$ sampling yielded a small number of non-terminating samples. 
In general, non-termination approaches zero as $k$ and $\mu$ increase, since $\eos$ has a lower chance of being excluded.

Beam search produced non-terminating sequences with both the $\tanh$-RNN and LSTM-RNN models. 
This means that $\eos$ was outside of the top tokens (determined by the beam width) considered at each step, since in our experiments we terminated the beam search when a single beam prefix contained $\eos$. 
Larger beam widths reduce non-termination, similar to increasing $k$ or $\mu$.

\begin{table*}[h!]
\centering
\resizebox{\linewidth}{!}{
\begin{tabular}{lp{1.05\linewidth}}
\toprule
\textbf{Prefix} & \textit{One Direction delivered a performance of `` Kiss You}\\
\textbf{nucleus} & '' , and the album 's second album , `` The X @-@ Files '' , `` The A. '' , `` The Preder '' , `` We 've Have You '' , `` I 've You Wanna Stay '' , `` The Dream '' , `` The Bide '' , `` My Achievement '', `` The B. B. '' , `` A Life ''  $\ldots$ \\
\textbf{c-nucleus} & '' , and `` My Boo '' was released on September 29 , 2010 . \eos \\
\midrule
\textbf{Prefix} & \textit{Boulter starred in two films in 2008 ,}\\
\textbf{nucleus} & and the band 's music , and `` The Rise of Monkey '' , `` The One With the Way '' , `` The `` Always '' , '' `` Always Your '' , `` The Wift '' , `` The Baste '' , `` The Special With '' , `` The Way '' , `` The Special With You ''   $\ldots$ \\
\textbf{c-nucleus} & and the latter was released in the United States . \eos \\
\bottomrule
\end{tabular}
}
\end{table*}
\begin{table*}[h!]
\vspace{-4mm}
\centering
\resizebox{\linewidth}{!}{
\begin{tabular}{lp{1.05\linewidth}}
\textbf{Prefix} & \textit{This period of unhappiness was the making of}\\
\textbf{Baseline} &  the `` most important " of the `` mad " , and the `` `` most important " of the '' `` '' , `` the most important " , and the `` devil " , `` The " , `` The One " , `` The One " , `` The One " , `` The One " , `` The One " , `` The One " , `` The One " , `` The One " , `` The One " , `` The One " , `` The One " , `` The One " , `` The One " , `` The One " $\ldots$  \\
\textbf{STRLM} & the first commandment of the poem . \eos\\
\midrule
\textbf{Prefix} & \textit{Du Fu 's mother died shortly after he was}\\
\textbf{Baseline} &  a member of the Order of the Order of the Order of the Order of the Order of the Order of the Order of the Order of the Order of the Republic of the Republic of the Republic of the Republic of the Republic of $\ldots$  \\
\textbf{STRLM} & a member of the Order of the British Empire . \eos\ \\
\bottomrule
\end{tabular}
}
\caption{Continuations with consistent nucleus sampling ($\mu = 0.2$) and self-terminating LSTM ($\epsilon=10^{-3}$).}
\label{tbl:st-continuations}
\end{table*}

\subsection{Consistency of the Proposed Methods}
\label{ssec:expr-strlm}

\paragraph{Consistent sampling.} Table~\ref{tbl:rnn-results-decoding-bpe} shows that consistent nucleus and top-$k$ sampling (\S\ref{ssec:consistent-sampling}) resulted in only terminating sequences, except for a few cases that we attribute to the finite limit $L$ used to measure the non-termination ratio.
Consistent nucleus paired with $\tanh$-RNN did not reduce $r_L$ as much as when it was paired with LSTM-RNN.
Example continuations are shown in Table~\ref{tbl:st-continuations}.
On prefixes that led to non-termination with the baseline method, the quality tends to improve with the consistent variant since the continuation now terminates.
Note that since the model's non-\eos\ token probabilities at each step are only modified by a multiplicative constant, the sampling process can still enter a repetitive cycle (e.g., when the constant is close to 1), though it is guaranteed to terminate.

\paragraph{Self-terminating RLM.} As seen in Table \ref{tbl:st-results-bpe}, the self-terminating recurrent language models are consistent with respect to greedy decoding, at the expense of perplexity compared to the vanilla model.
The value of $\varepsilon$ from Definition \ref{def:strlm}, which controls a lower-bound on termination probability at each step, influences both $r_L$ and perplexity. 
When $\varepsilon$ is too large ($\varepsilon=10^{-2}$), perplexity degrades.
When $\varepsilon$ is too small ($\varepsilon=10^{-4}$), the lower-bound grows slowly, so \eos\ is not guaranteed to be top-ranked within $L$ steps, resulting in a positive $r_L$.
An $\varepsilon$ of $10^{-3}$ balanced  consistency and language modeling quality, with a zero non-termination ratio and perplexity within 8 points of the baseline.

As shown in Figure~\ref{fig:st_lstm_length}, the self-terminating model matches the data length distribution better than the baseline.
Example decoded sequences are shown in Table~\ref{tbl:st-continuations}. 
For prefixes that led to non-termination with the baseline, the self-terminating models yields finite sequences with reasonable quality.
The examples suggest that some cases of degenerate repetition \cite{holtzman2019curious,welleck2019neural} are attributed to inconsistency. 

\begin{table}[t]
\centering
\small 

\begin{tabular}{p{0.1cm}p{0.1cm}llllll}
\toprule
&ST&$\epsilon$&$r_L$ (\%) & \textbf{perplexity} \\
\midrule
\multirow{4}{*}{\rotatebox{90}{{$\tanh$-RNN}}}&\Checkmark&  $10^{-2}$ & 00.00 $\pm$ 0.0 & 229.09 $\pm$ 9.2 \\
&\Checkmark& $10^{-3}$ & 00.00 $\pm$ 0.0 & 191.63 $\pm$ 1.4 \\
&\Checkmark& $10^{-4}$ & 00.02 $\pm$ 0.02 & 188.36 $\pm$ 2.2 \\
& \text{\ding{55}} & -- & 12.35 $\pm$ 5.2 & 186.44 $\pm$ 1.4 \\
\midrule
\multirow{4}{*}{\rotatebox{90}{{LSTM}}}&\Checkmark& $10^{-2}$ & 0.00 $\pm$ 0.0 & 219.71 $\pm$ 9.2 \\
&\Checkmark& $10^{-3}$ & 0.00 $\pm$ 0.0 & 186.04 $\pm$ 1.6  \\
&\Checkmark& $10^{-4}$ & 0.18 $\pm$ 0.35 & 183.57 $\pm$ 2.3 \\
&\text{\ding{55}}& -- & 1.48 $\pm$ 1.43 & 178.19 $\pm$ 1.3 \\ 
\bottomrule
\end{tabular}
\caption{\label{tbl:st-results-bpe} Non-termination ratio $(r_L$ (\%)) of greedy-decoded sequences and test perplexity for STRLMs.}
\end{table}
\begin{figure}[ht!]
\includegraphics[width=0.45\textwidth]{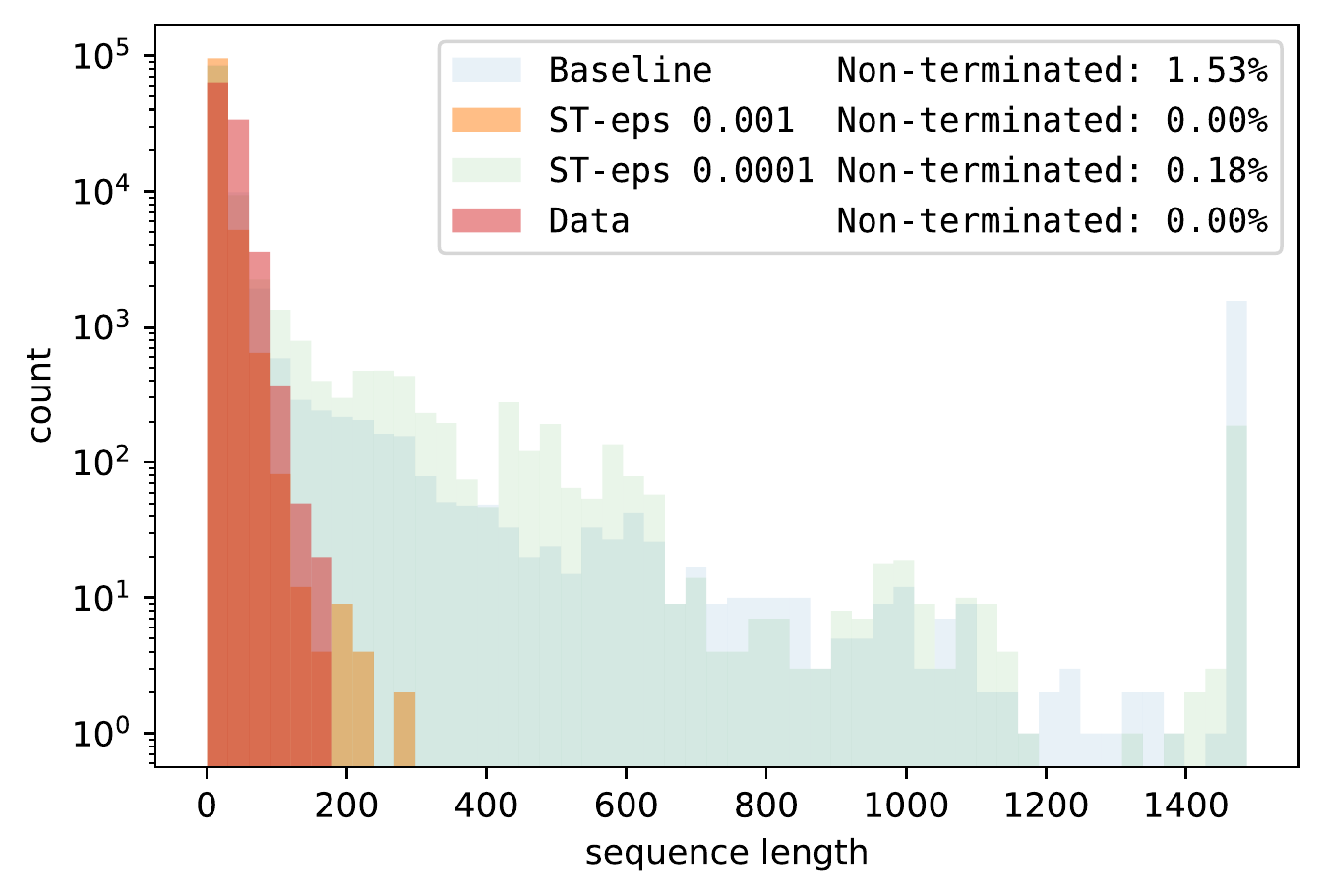}

\caption{Lengths of generated sequences using greedy decoding from vanilla and self-terminating LSTMs.}
\label{fig:st_lstm_length}
\end{figure}

\subsection{Inconsistency of GPT-2}
\label{ssec:gpt-2}
We perform a final experiment with GPT-2 117M, a transformer language model pre-trained with maximum likelihood on WebText, a collection of scraped web pages (see \citet{radford2018gpt2}).
GPT-2 has been observed to produce repetitive text with greedy and beam search \citep{holtzman2019curious}.

\paragraph{Experimental setup.}
We use the Wikitext-103 dataset \citep{merity2016pointer}, a large-scale collection of Wikipedia articles with over 100 million words and 260 thousand unique tokens.
We split the dataset into sequences according to the dataset's newline boundaries, then split each sequence into a context $C$ and continuation $Y$, resulting in a dataset of $(C, Y)$ pairs. 
Each continuation ends in a special $\eos$ token.
We use a context size of $k=10$ tokens, and discard sequences that are length $k$ or shorter.
The resulting dataset contains 874,556 training, 1,896 validation, and 2,162 test pairs.

We fine-tune the pre-trained GPT-2 model using maximum likelihood for 400k steps, and select the model state with the lowest validation perplexity (evaluated every 5k steps). 
Each training batch contains a maximum of 1024 total tokens.
We use the implementation and default hyper-parameters from the \texttt{transformers} library \citep{wolf2019huggingface}.
We fine-tune the self-terminating GPT-2 models in a similar manner, starting from the pre-trained GPT-2 model and using the same hyper-parameters.

Each model is evaluated using greedy decoding with a maximum sequence length of 500, which was selected so that each decoded validation batch could fit in GPU memory.
We define the non-termination ratio $(r_L)$ using $L=500$; this limit is more strict than the limit used in the preceding experiments (1500), yet still allows us to see large differences in generation behavior between the model and the ground truth (e.g. see Figure \ref{fig:gpt2_lentgth}).

\paragraph{Results.} Table \ref{tbl:gpt2-st-results} shows the non-termination ratio and perplexity of the baseline and self-terminating GPT-2 models.
The self-terminating variant prevents non-termination, at the cost of perplexity.
The model here uses $\epsilon=2.5\times 10^{-3}$, which we selected after observing that at higher values of $\epsilon$, e.g. $1.0\times 10^{-3}$, the self-terminating model generated sequences longer than the limit used to determine termination (500).
Figure \ref{fig:gpt2_lentgth} shows the length distributions of the baseline GPT-2 continuations and those of the self-terminating GPT-2.
The GPT-2 117M model generates many sequences at or near the maximum sequence length (500), unlike the ground-truth data.
Introducing self-termination shifts the mass towards shorter sequences, whose lengths are also present in the ground-truth data.

\begin{table}[t]
\centering
\small 

\begin{tabular}{lll}
\toprule
&$r_L$ (\%) & \textbf{perplexity} \\
\midrule
GPT2-117M    & 37.91 & 20.92\\
GPT2-117M ST & 00.00 & 27.25\\
\bottomrule
\end{tabular}
\caption{\label{tbl:gpt2-st-results} Non-termination ratio $(r_L$ (\%)) of greedy-decoded sequences and perplexity for GPT2-117M and the self-terminating variant (ST) on Wikitext-103.}
\end{table}

\section{Future Directions}
\label{sec:future}
The methods we proposed in this paper resolve inconsistency by changing the decoding algorithm or model parameterization.
Another approach is to address inconsistency in the \textit{learning} phase.
One interesting direction is to investigate whether the lack of decoding in maximum likelihood learning is a cause of inconsistency.
Maximum likelihood learning fits the model $p_{\theta}$ using the data distribution, whereas a decoded sequence from the trained model follows the distribution $q_{\mathcal{F}}$ induced by a decoding algorithm.
\textit{Sequence-level} learning, however, uses a decoding algorithm during training (e.g., \citet{ranzato2016seqeunce}),
which we hypothesize can result in a good sequence generator that is consistent with respect to incomplete decoding.

\section{Conclusion}
We extended the notion of consistency of a recurrent language model put forward by \citet{chen2017recurrent} to incorporate a decoding algorithm, and used it to analyze the discrepancy between a model and the distribution induced by a decoding algorithm.
We proved that incomplete decoding is inconsistent, and proposed two methods to prevent this: consistent decoding and the self-terminating recurrent language model. 
Using a sequence completion task, we confirmed that empirical inconsistency occurs in practice, and that each method prevents inconsistency
while maintaining the quality of generated sequences. 
We suspect the absence of decoding in maximum likelihood estimation  as a cause behind this inconsistency, and suggest investigating sequence-level learning as an alternative. 

\section*{Acknowledgements}

We thank Chris Dyer, Noah Smith and Kevin Knight for valuable discussions. 
This work was supported by NSF Award 1922658 NRT-HDR: FUTURE Foundations, Translation, and Responsibility for Data Science; Samsung Advanced Institute of Technology (Next Generation Deep Learning: from pattern recognition to AI); and Samsung Research (Improving Deep Learning using Latent Structure). 
KC thanks eBay and NVIDIA for their support.

\bibliographystyle{acl_natbib}
\bibliography{anthology,emnlp2020}

\clearpage

\appendix

\section{Additional Definitions}
\label{apx:sec-defs}
In contrast to greedy decoding, beam search with width $k$, $\mathcal{F}_{\text{beam-k}}$, operates on the level of partial sequences or prefixes.

\begin{definition}[Prefix]
\label{def-beam-hyp}
A prefix $\rho_t$ is an ordered collection of items from $V$. 
The score of a prefix is
\begin{align*}
s(\rho_t) = \sum_{\tau=1}^{t} \log p_\theta(y_{\tau} = \rho_t[\tau]\,|\,\rho_t[<\tau],C),
\end{align*}
where $\rho_t[\tau]$ is a token at time $\tau$ from $\rho_t$.
\end{definition}

Starting from a set of empty prefixes, at each iteration a new prefix set is formed by expanding each prefix, then choosing the $k$ highest scoring expanded prefixes.

\begin{definition}[Beam search]
\label{def:beam}
Beam search with width $k$, $\mathcal{F}_{\text{beam}-k}$, generates a sequence from a recurrent language model $p_{\theta}$ by maintaining a size-$k$ prefix set $\mathrm{P}_t^{\text{top}}$. 
Starting with $P_0^{top}=\varnothing$, at each iteration $t\in \{1,2,\ldots\}$ beam search forms a new prefix set $\mathrm{P}_t^{\text{top}}$ by expanding the current set, $\mathrm{P}_t = \bigcup_{\rho \in \mathrm{P}_{t-1}^{\text{top}}} \{\rho \circ v\, |\, v\in V\}$ (where $\rho \circ v$ is concatenation),
then choosing the $k$ highest scoring elements: 
$\mathrm{P}_t^{\text{top}} = \underset{\rho \in \mathrm{P}_t}{\arg\text{top-k}}\ s(\rho).$ 
Any $\rho \in \mathrm{P}_t^{\text{top}}$ ending with $\left<\text{eos}\right>$ is restricted from being expanded further, and is added to a set $S$. Beam search ends when $S$ contains $k$ sequences, and returns the highest scoring sequence in $S$. 
\end{definition}

\section{Proof of Lemmas in Section 3}
\label{apx:sec-proofs-sec3}
\begin{varlemma}{3.1}
    If a recurrent language model $p_{\theta}$ is consistent,
    $p_{\theta}(|Y|=\infty\,|\,C)=0$ for any probable context $C$.
\end{varlemma}
\begin{proof}
    Suppose there exists a probable context $\tilde{C}$ such that $p_{\theta}(|Y|=\infty\,|\,\tilde{C}) > 0$.
    Then
    \begin{align*}
        p_{\theta}(|Y|=\infty)
        & = \mathbb{E}\left[p_{\theta}(|Y|=\infty\,|\,C)\right] \\
        & \geq p(\tilde{C}) p_{\theta}(|Y|=\infty\,|\,\tilde{C}) > 0,
    \end{align*}
    which contradicts the consistency of the model $p_{\theta}$.
\end{proof}

\begin{varlemma}{3.2}
    A recurrent language model $p_{\theta}$ is consistent if $\|h_t\|_p$ is uniformly bounded for some $p\geq1$.
\end{varlemma}
\begin{proof}
    Let $B>0$ be an upper bound such that $\|h_t\|_p < B$ for all $t$.
    Let $q$ be the conjugate of $p$ satisfying $1/p+1/q=1$.
    Then we have from H\"{o}lder's inequality,
    for all $v\in V$ and $t$,
    \begin{align*}
        u_{v}^\top h_t
        \leq \|u_v^\top h_t\|_1
        \leq \|h_t\|_p\|u_v\|_q 
          < B u^+,
    \end{align*}
    where $u^+ = \max_{v\in V} \|u_v\|_q$.
    Note that
    \begin{align*}
        \log \sum_{v\in V} e^{u_{v}^\top h_t + c_{v}}
        & \leq \log \left ( \max_{v\in V} e^{u_{v}^\top h_t + c_{v}} \times |V|  \right) \\
        & \leq \max_{v\in V} \{u_{v}^\top h_t + c_{v}\} + \log |V| \\
        & < Bu^+ + c^+ + \log |V|,
    \end{align*}
    where $c^+=\max_{v\in V} c_v$.
    For a given $y_{<t}$ and context $C$,
    \begin{align*}
        & \log p_{\theta}(\eos|\,y_{<t},C) \\
         = & (u_{\eos}^\top h_t + c_{\eos}) - \log\sum_{v \in V} e^{u_{v}^\top h_t + c_{v}} \\
         > & (-Bu^+ + c_{\eos}) - (Bu^+ + c^+ + \log |V|) 
         >  -\infty,
    \end{align*}
    and it follows that $p_{\theta}(\eos |\,y_{<t},C) > \xi > 0$ for some strictly positive constant $\xi$.
    Then
    \begin{align*}
        p_{\theta}(|Y|=\infty)
        & = \lim_{t\to\infty} p_{\theta}(|Y|>t) \\
        & = \lim_{t\to\infty} \mathbb{E}\left[p_{\theta}(|Y|>t\,|\,C) \right] \\
        & = \mathbb{E}\left[\lim_{t\to\infty} p_{\theta}(|Y|>t\,|\,C) \right] \\
        & \leq \mathbb{E}\left[\lim_{t\to\infty} (1 - \xi)^t\right] = 0,
    \end{align*}
    and hence $p_{\theta}$ is consistent.
\end{proof}

\begin{varlemma}{3.3}
A consistent decoding algorithm with respect to a consistent recurrent language model decodes only probable sequences. That is, if $q_{\mathcal{F}}(Y\,|\,C)>0$, then $p_{\theta}(Y\,|\,C)>0$ for any probable context $C$.
\end{varlemma}
\begin{proof}
Suppose there exists a decoded sequence $\tilde{Y}$ by $\mathcal{F}$ and probable context $\tilde{C}$ such that
$q_{\mathcal{F}}(\tilde{Y}\,|\,\tilde{C})>0$ but $p_{\theta}(\tilde{Y}\,|\,\tilde{C})=0$.
By Remark 2.1,
the sequence $\tilde{Y}$ is of infinite length and thus $q_{\mathcal{F}}(|Y|=\infty\,|\,\tilde{C})\geq q_{\mathcal{F}}(\tilde{Y}\,|\,\tilde{C})>0$, 
which contradicts the consistency of $q_{\mathcal{F}}$ by Lemma 3.1.
\end{proof}

\section{Proofs for Section 4}
\label{apx:sec-proofs-sec4}

\begin{vartheorem}{4.1}
Suppose a recurrent LM $p_{\theta}$ has uniformly bounded $\|h_t\|_p$ for some $p\geq 1$.
If a decoding algorithm $\mathcal{F}$ satisfies $q_{\mathcal{F}}(\eos|\,y_{<t}, C) \geq p_{\theta}(\eos|\,y_{<t}, C)$ for every prefix $y_{<t}$ and context $C$,
then the decoding algorithm $\mathcal{F}$ is consistent with respect to the model $p_{\theta}$.
\end{vartheorem}
\begin{proof}
By Lemma \ref{lemma:bddrlm} 
the model $p_{\theta}$ is consistent and $p_{\theta}(\eos|\,y_{<t}, C)>\xi$ for some positive value $\xi$.
Thus, $q_{\mathcal{F}}(\eos|\,y_{<t},C) \geq p_{\theta}(\eos|\,y_{<t},C) > \xi$.
For $t\geq 1$,
\begin{align*} 
    \label{eqn:qp-ineq}
    & q_{\mathcal{F}}(|Y|>t\,|\,C) \\
    & = q_{\mathcal{F}}(y_1\neq\eos,\cdots,y_t\neq\eos|\,C) \\ 
    & \leq (1 - \xi)^t.
\end{align*}
Taking the limit $t\to\infty$ and expectation over $C$, we have
\begin{align*}
    q_{\mathcal{F}}(|Y|=\infty)
    & = \mathbb{E}_C\left[\lim_{t\to\infty} q_{\mathcal{F}}(|Y|>t\,|\,C)\right] \\
    & \leq \lim_{t\to\infty} (1 - \xi)^t = 0,
\end{align*}
from which the decoding algorithm is consistent.
\end{proof}

\begin{vartheorem}{4.2}
   Greedy decoding is consistent with respect to any self-terminating recurrent LM.
\end{vartheorem}
\begin{proof}
Let $p_{t}^{\eos}$ denote $p_{\theta}(\eos|\,y_{<t}, C)$ and $a_{t}^{\eos}$ denote $u_{\eos}^\top h_t + c_{\eos}$.
By Definition~\ref{def:strlm} we have
\begin{align*}
    p_{t}^{\eos} 
    & = 1 - \sigma(a_{t}^{\eos})(1-p_{t-1}^{\eos}) \\
    & = 1 - \prod_{t'=0}^t \sigma(a^{\eos}_{t'})
      \geq 1 - (1-\epsilon)^{t+1}.
\end{align*}
Take $B=-\log 2 / \log(1-\epsilon)$.
We then have $p_{t}^{\eos}>1/2$ for all $t > B$, which implies that $\eos$ is always the most probable token after time step $B$.
Hence, the sequence length is less than $B$ with probability 1.
\end{proof}

\begin{vartheorem}{4.3}
    Beam search with width $k$, $\mathcal{F}_{\text{beam}-k}$, is consistent with respect to any STRLM.
\end{vartheorem}
\begin{proof}
    Let $S(\rho)$ be the size-$k$ set of sequences kept by $\mathcal{F}_{\text{beam}-k}$ that start with a prefix $\rho$.
    
    Take $B=-\log 2 / \log(1-\epsilon)$ as in the proof of Theorem~4.2.
    Suppose that there exists at least one prefix $\hat{\rho} \in P^{\text{top}}_{B}$ which does not end with \eos.
    
    We first want to show that $\hat{\rho}$ induces at most $k$ more steps in beam search with width $k$, that is, $Y \in S(\hat{\rho})$ implies $|Y| \leq B+k$.
    
    We know from the proof of Theorem~4.2 that 
    an STRLM $p_{\theta}$ satisfies: for any context $C$ and $v\in V \setminus \{\eos\},$
    \begin{align*}
        p_{\theta}(\eos |\,\hat{\rho}, C) 
        > p_{\theta}(v \,|\,\hat{\rho}, C).
    \end{align*}
    For any subsequence $y=(y_1,\ldots,y_l)$ with $y_1\neq\eos$,
    \begin{align*}
        p_{\theta}(\hat{\rho}\circ y\,|\,\hat{\rho}, C)
        & = \prod_{i=1}^l p_{\theta}(y_{i}\,|\,\hat{\rho}\circ y_{<i}, C) \\
        & \leq p_{\theta}(y_{1}\,|\,\hat{\rho}, C) \\
        & < p_{\theta}(\eos |\, \hat{\rho}, C).
    \end{align*}
    Thus, $\hat{\rho} \circ \eos$ is the most probable sequence among sequences starting with the prefix $\hat{\rho}$, and it follows that $\hat{\rho} \circ \eos \in S(\hat{\rho})$.
    
    Thus, in $S(\hat{\rho})$, there are $(k-1)$ sequences starting with $\hat{\rho} \circ v$ for $v \in V\setminus\{\eos\}$.
    By the same argument, at each step at least one sequence ending with \eos\ is added to $S(\hat{\rho})$, and therefore at time step $(B+k)$, $k$ sequences ending with \eos\ are in $S(\hat{\rho})$.
    
    Note that the result set $S$ by $\mathcal{F}_{\text{beam}-k}$ (Definition~2.11) satisfies
    \begin{align*}
        S \subseteq \bigcup_{\rho \in P^{\text{top}}_B} S(\rho).
    \end{align*}
    Since each $\rho \in P^{\text{top}}_{B}$ induces sequences of length at most $B+k$, we have
    \begin{align*}
        p_{\theta}(|Y|>B+k\,|\,C) = 0.
    \end{align*}
    Taking the expectation over $C$ yields the consistency of the model $p_{\theta}$.
\end{proof}

\newpage

\section{Additional Results and Experiment Details}
\label{apx:sec-additional}
\paragraph{Training.}
Each model is trained on a single Nvidia P40 GPU for up to 100 epochs, stopping when validation perplexity does not decrease for 10 consecutive epochs.

\paragraph{Hyper-parameters.} Tables \ref{tbl:grid},\ref{tbl:grid-bpe} show the grid search specifications. All models were 2 layers and were trained with the Adam optimizer.

\paragraph{Model perplexities.} Tables \ref{tbl:rnn-results-ppl-word}, \ref{tbl:rnn-results-ppl-bpe} shows train and test perplexities for the $\tanh$-RNN and LSTM-RNN models using word and BPE tokenization, respectively.

\paragraph{Additional example continuations.} Table \ref{tbl:st-continuations-more} shows additional greedy-decoded continuations using a self-terminating LSTM-RNN and the baseline LSTM-RNN with BPE tokenization.

\paragraph{GPT-2 length distributions.}
Figure~\ref{fig:gpt2_lentgth} shows the length distributions of ground-truth continuations, continuations from GPT-2 117M, and continuations from the self-terminating GPT-2 117M.

\begin{figure}[h]
\includegraphics[width=0.45\textwidth]{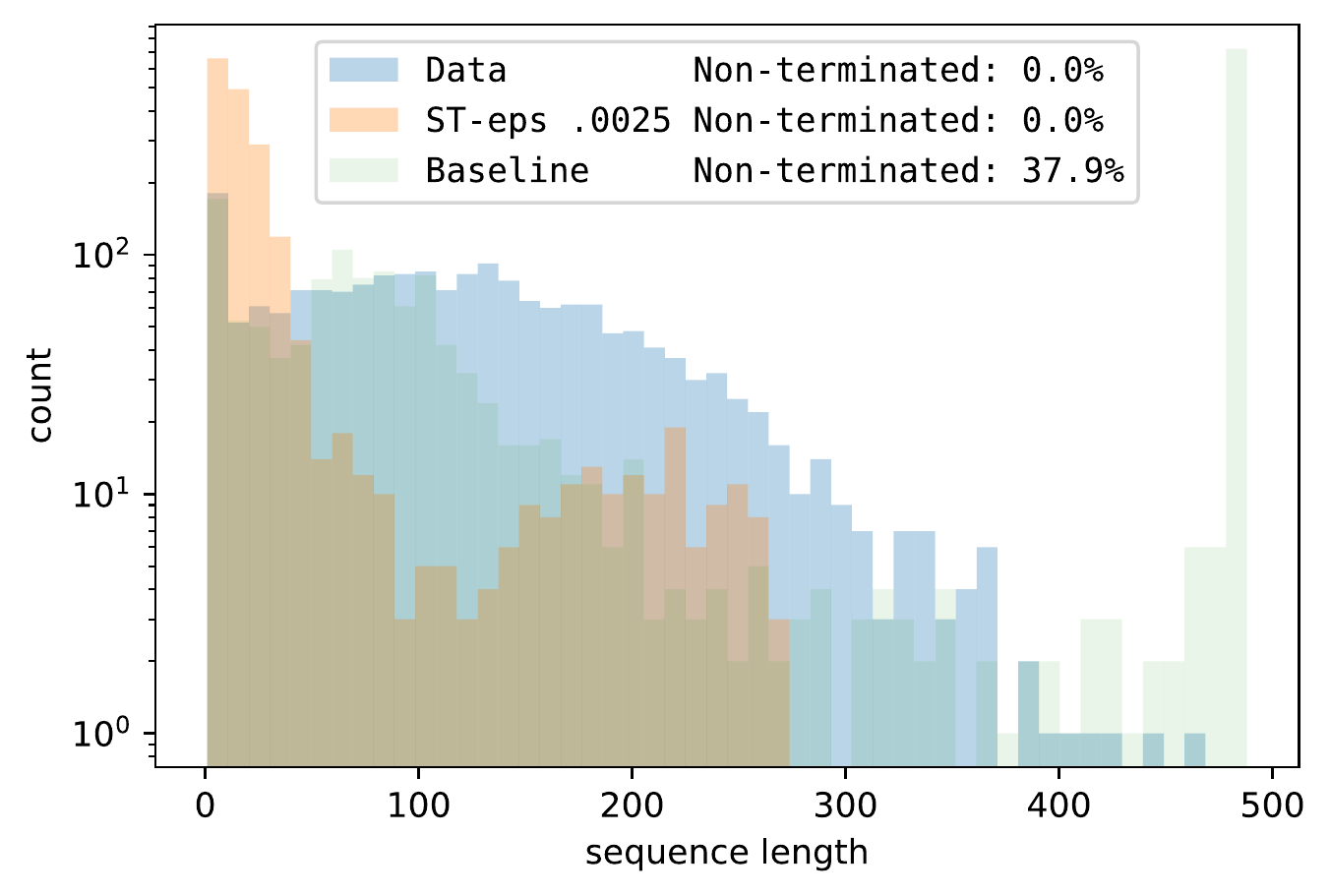}

\caption{Lengths of ground-truth and greedy-decoded continuations from the baseline GPT-2 117M and self-terminating GPT-2 117M  models ($\epsilon=0.0025$).}
\label{fig:gpt2_lentgth}
\end{figure}

\begin{table}[t]
\centering
\small 

\smallskip
\begin{tabular}{ll}
\toprule
\textbf{Parameter} & \textbf{Values}  \\
\midrule
Hidden Size & $\{\textit{256},\textbf{512},1024\}$ \\
Dropout & $\{0.1,\textit{0.3},\textbf{0.5}\}$ \\
Embedding Weight Tying & $\{\textbf{\textit{\text{True}}},\text{False}\}$ \\
\bottomrule
\end{tabular}
\caption{\label{tbl:grid} Grid search specification. The values selected for the \textbf{LSTM-RNN} and \textit{$\tanh$-RNN} models are shown in bold and italics, respectively (word tokenization).}
\end{table}

\begin{table}[t]
\centering
\small 

\smallskip
\begin{tabular}{ll}
\toprule
\textbf{Parameter} & \textbf{Values}  \\
\midrule
Hidden Size & $\{\textit{\textbf{256}},512,1024\}$ \\
Dropout & $\{0.1,\textbf{0.3},\textit{0.5}\}$ \\
Embedding Weight Tying & $\{\text{True},\textbf{\textit{\text{False}}}\}$ \\
\bottomrule
\end{tabular}
\caption{\label{tbl:grid-bpe} Grid search specification. The values selected for the \textbf{LSTM-RNN} and \textit{$\tanh$-RNN} models are shown in bold and italics, respectively (BPE tokenization).}
\end{table}

\begin{table}[t]
\centering
\small 

\begin{tabular}{llllll}
\toprule
\textbf{Type} & \# Train & \# Valid & \# Test  & $|V|$ & Avg. len \\
\midrule
\textbf{Word} & 78274 & 8464  & 9708  & 33182 & 24      \\
\textbf{BPE}  & 83344 & 8721  & 10156 & 19483 & 28     \\
\bottomrule
\end{tabular}

\caption{Wikitext2 statistics.}
\label{tbl:dataset-stat}
\end{table}

\begin{table}[t]
\centering
\small 

\begin{tabular}{lll}
\toprule
& \multicolumn{1}{l}{$\mathbf{\tanh}$\textbf{-RNN}} & \multicolumn{1}{l}{\textbf{LSTM-RNN}} \\ 
\midrule
\textbf{ancestral} & 0.00 $\pm$ 0.0 & 0.00 $\pm$ 0.0  \\
\midrule
\textbf{greedy} & 6.07 $\pm$ 5.6 & 1.03 $\pm$ 0.3\\
\textbf{beam-2} & 1.21 $\pm$ 0.3 & 0.07 $\pm$ 0.1  \\ 
\textbf{beam-4} & 0.29 $\pm$ 0.1 & 0.00 $\pm$ 0.0  \\
\midrule
\textbf{topk-2} & 0.84 $\pm$ 0.8 & 0.00 $\pm$ 0.0 \\
\textbf{topk-4} & 0.02 $\pm$ 0.0 & 0.00 $\pm$ 0.0 \\
\textbf{nucleus-0.2} & 2.49 $\pm$ 0.2 & 0.76 $\pm$ 0.3 \\
\textbf{nucleus-0.4} & 0.32 $\pm$ 0.1 & 0.22 $\pm$ 0.1 \\
\bottomrule
\end{tabular}
\caption{\label{tbl:rnn-results-decoding} Non-termination ratio ($r_L$ (\%)) of decoded sequences using ancestral sampling and incomplete decoding methods (word tokenization).}
\end{table}

\begin{table}[t]
\centering
\small 

\begin{tabular}{p{0.1cm}p{0.1cm}llllll}
\toprule
&ST&$\epsilon$&$r_L$ (\%) & \textbf{perplexity} \\
\midrule
\multirow{4}{*}{\rotatebox{90}{{$\tanh$-RNN}}}&\Checkmark&  $10^{-2}$ & 0.00 $\pm$ 0.0 & 150.07 $\pm$ 2.7 \\
&\Checkmark& $10^{-3}$ & 0.00 $\pm$ 0.0 & 138.01 $\pm$ 0.6 \\
&\Checkmark& $10^{-4}$ & 1.04 $\pm$ 0.6 & 138.67 $\pm$ 1.8 \\
& \text{\ding{55}} & -- & 6.07 $\pm$ 5.6 & 136.57 $\pm$ 1.8 \\
\midrule
\multirow{4}{*}{\rotatebox{90}{{LSTM}}}&\Checkmark& $10^{-2}$ & 0.00 $\pm$ 0.0 & 101.24 $\pm$ 0.3 \\
&\Checkmark& $10^{-3}$ & 0.00 $\pm$ 0.0 & 94.33 $\pm$ 0.6  \\
&\Checkmark& $10^{-4}$ & 0.94 $\pm$ 0.5 & 94.15 $\pm$ 0.8 \\
&\text{\ding{55}}& -- & 1.03 $\pm$ 0.3 & 91.86 $\pm$ 0.4 \\ 
\bottomrule
\end{tabular}
\caption{\label{tbl:st-results} Non-termination ratio $(r_L$ (\%)) of greedy-decoded sequences and test perplexity for self-terminating recurrent models (word tokenization).}
\end{table}

\begin{table}[ht!]
\centering
\small 

\smallskip
\begin{tabular}{lll}
\toprule
\textbf{model} & \textbf{context} & \textbf{perplexity} \\ 
\midrule
$\tanh$-RNN & train & {91.54 $\pm$ 7.9}\\
$\tanh$-RNN & test & {136.57 $\pm$ 1.8}\\
\midrule
LSTM-RNN & train & {45.80 $\pm$ 2.5}\\
LSTM-RNN & test & {91.86 $\pm$ 0.4}\\
\bottomrule
\end{tabular}
\caption{Perplexities of trained recurrent language models (word tokenization).}
\label{tbl:rnn-results-ppl-word}
\end{table}

\begin{table}[ht!]
\centering
\small 

\smallskip
\begin{tabular}{lll}
\toprule
\textbf{model} & \textbf{context} & \textbf{perplexity} \\ 
\midrule
$\tanh$-RNN & train & {61.20 $\pm$ 1.2}\\
$\tanh$-RNN & test & {186.44 $\pm$ 1.4}\\
\midrule
LSTM-RNN & train & {72.72 $\pm$ 2.4}\\
LSTM-RNN & test & {178.39 $\pm$ 1.2}\\
\bottomrule
\end{tabular}
\caption{ Perplexities of trained recurrent language models (BPE tokenization).}
\label{tbl:rnn-results-ppl-bpe}
\end{table}

\begin{table*}[h!]
\vspace{-4mm}
\centering
\resizebox{\linewidth}{!}{
\begin{tabular}{lp{1.05\linewidth}}
\toprule
\textbf{Prefix} & \textit{Payne was quoted as saying : `` With the album}\\
\textbf{nucleus} & 's '' album , `` The Predious '' , `` The One With the Wind '' , `` I 've Ever 't Have You 's My Way '' , `` I 've Ever It 's Johnny '' , `` The Predes '' , `` `` Always '' , `` The Predatory Was '' , `` The Dream '' , `` The Baste '' , '' `` Always Boy '' , `` My Drum '' , `` The Simpsons '' , `` `` Always Man '', `` The `` Sweet Night '' ,  $\ldots$ \\
\textbf{c-nucleus} & 's " album , `` The Predious " , `` The One With the Wind " , `` I 've Ever 't Have You 's My Way " \eos \\
\midrule
\textbf{Prefix} & \textit{In a 2006 interview , fellow actor Ben Whish}\\
\textbf{nucleus} & 's father , a young actor , and a romantic relationship with the show , `` The One Where the The Simpsons " , `` The Pape " , `` The Next Generation " , `` The Sixth Extinction " , `` We 't You Wanna Stay " , `` The Dream " , `` The Predator " , `` The Collection " , `` The Big Lear " , `` The Predor " , `` The Predation " , `` My Blue " , `` The Simpsons " , `` The Sixth Extinction " , `` My Love " , `` The Rise of the Year " , `` The Simpsons " , `` The Predator " , `` My Dream " ,  $\ldots$ \\
\textbf{c-nucleus} & was the first time in the film , and was published in the same episode of the season . \eos \\
\midrule
\textbf{Prefix} & \textit{Most of what is known of Du Fu 's}\\
\textbf{Baseline} &  `` the " , the '' `` great " , the " `` " , `` the most important " , `` the most important " , `` Ode to the Nightingale " , `` Ode to the Nightingale " , `` Ode to the Nightingale " , `` Ode to the Nightingale " , `` Ode to the Nightingale " , `` Ode to the Nightingale " , `` Ode to the Nightingale " , `` Ode to the Nightingale " , $\ldots$  \\
\textbf{STRLM} & Coty , was a `` one of the most important " of the American science fiction . \eos\\
\midrule
\textbf{Prefix} & \textit{He was relieved by Yan Wu , a friend and}\\
\textbf{Baseline} &  the first wife of the Order of the Order of the Order of the Order of the Order of the Republic of the Republic of the Republic of the Republic of the Republic of the Republic of the Republic of the Republic of the Republic of the Republic of the Republic of the Republic of the Republic $\ldots$  \\
\textbf{STRLM} & the wife of the Royal Navy . \eos\ \\
\bottomrule
\end{tabular}
}
\caption{More continuations with consistent nucleus sampling ($\mu = 0.2$) and self-terminating LSTM ($\epsilon=10^{-3}$) with BPE tokenization.}
\label{tbl:st-continuations-more}
\end{table*}

\end{document}